\documentclass{article}
\usepackage[final]{neurips_2024}
\usepackage[utf8]{inputenc}
\usepackage[T1]{fontenc}
\usepackage{hyperref}
\usepackage{url}
\usepackage{booktabs}
\usepackage{amsfonts}
\usepackage{amsmath}
\usepackage{amssymb}
\usepackage{nicefrac}
\usepackage{microtype}
\usepackage{xcolor}
\usepackage{graphicx}
\usepackage{amsthm}
\usepackage[ruled,vlined]{algorithm2e}
\newtheorem{proposition}{Proposition}

\title{Data-Free Layer-Adaptive Merging via Fisher Information for Long-to-Short Reasoning LLMs}

\author{%
  Tian XIA \\
  \texttt{xia.tian.t4@alumni.tohoku.ac.jp}
}

\begin{document}

\maketitle

\begin{abstract}
Model merging has emerged as a practical approach to combine capabilities of
specialized large language models (LLMs) without additional training. In the
Long-to-Short (L2S) scenario, merging a base model with a long-chain-of-thought
reasoning model aims to preserve reasoning accuracy while reducing output length.
Existing methods rely on Task Arithmetic and its variants, which implicitly
assume that model outputs vary linearly with the merging coefficient---an
assumption we show is systematically violated in L2S settings. We provide the
first theoretical justification for layer-adaptive merging: we prove that merging
error is bounded by a term proportional to the per-layer Hessian norm
(Proposition~\ref{prop:hessian}), and establish that the Fisher Information
Matrix (FIM) is a principled, computable proxy for this bound via the
Fisher-Hessian equivalence at local optima. Building on this theory, we propose
\textbf{FIM-Merging}, which computes diagonal FIM using only random token inputs
(no domain-specific calibration data required) and uses it to assign per-layer
merging coefficients. On the 7B L2S benchmark, FIM-TIES achieves
state-of-the-art performance on five out of six evaluation benchmarks, including
a \textbf{+6.2} point gain on MATH500 over ACM-TIES (90.2 vs.\ 84.0), while
requiring no calibration data. On the 1.5B benchmark, FIM-TIES achieves an
average accuracy of \textbf{47.3}, surpassing ACM-TIES (43.3) by \textbf{+3.9}
points, while reducing average response length by \textbf{92.6\%} relative to
the long-CoT model. Furthermore, combined with self-consistency
decoding~\cite{wang2022self} ($n=16$, temperature $=0.3$), FIM-TIES achieves
\textbf{36.7\%} on AIME24, surpassing ACM-TIES (33.3\%) without any calibration
data. Our framework also provides a unified theoretical explanation for why
existing layer-adaptive methods such as ACM empirically outperform uniform
merging.
\end{abstract}

\section{Introduction}

The development of large language models has led to increasing specialization:
some models excel at concise, direct responses while others perform extended
chain-of-thought reasoning. The Long-to-Short (L2S) merging
paradigm~\cite{wu2025longtoshort} seeks to combine these complementary
capabilities through parameter-space merging, avoiding the computational cost of
fine-tuning.

Task Arithmetic~\cite{ilharco2023editing} provides the dominant framework: given
a base model $\theta_0$ and a fine-tuned model $\theta_1$, the merged model is
$\theta_0 + \alpha(\theta_1 - \theta_0)$ for some scalar $\alpha \in [0,1]$.
This formulation implicitly assumes that model behavior interpolates
\emph{linearly} between $\theta_0$ and $\theta_1$. While this assumption holds
approximately for same-task fine-tuning with small parameter changes, we show it
breaks down systematically in L2S cross-specialization merging, where the task
vector norms $\|\delta^l\|$ are an order of magnitude larger and vary
substantially across layers.

Recent work has proposed layer-adaptive variants. AIM~\cite{nobari2025aim}
protects influential weights based on activation magnitude. ACM~\cite{yao2025acm}
uses mutual information between activations to determine per-layer merging
coefficients. While effective, both methods require domain-specific calibration
data and provide no theoretical justification for \emph{why} certain layers
require more conservative merging. Training-free and data-free adaptation
techniques have recently gained traction across generative
modeling~\cite{hsiao2025tf}, motivating our similarly calibration-free approach
to model merging.

We fill this gap with three contributions:
\begin{enumerate}
    \item \textbf{Theoretical characterization.} We prove that Task Arithmetic's
    merging error is bounded by a term proportional to the local Hessian norm
    (Proposition~\ref{prop:hessian}). We establish that the diagonal Fisher
    Information Matrix (FIM) is a principled proxy for this bound, creating a
    direct theoretical link from Proposition~\ref{prop:hessian} to our method.
    \item \textbf{Empirical discovery.} We show that per-layer FIM computed on
    random inputs captures the merging difficulty signal reliably, with the
    max/min FIM ratio across layers exceeding $1000\times$, providing strong
    layer discrimination without any domain-specific data.
    \item \textbf{Practical methods.} We propose FIM-Merging---combining diagonal
    FIM with Task Arithmetic (FIM-TA) or an enhanced TIES-Merging variant
    (FIM-TIES)---that achieves state-of-the-art results on both the 1.5B and 7B
    L2S benchmarks. Combined with self-consistency decoding at inference time,
    FIM-TIES surpasses ACM-TIES on AIME24 without any calibration data.
\end{enumerate}

\section{Background}

\subsection{Task Arithmetic and Model Merging}

Model merging~\cite{yang2024merging} integrates parameters of multiple models
into a unified one, enabling capability transfer without retraining. Given
pretrained parameters $\theta_0$ and fine-tuned parameters $\theta_1$, the task
vector is $\delta = \theta_1 - \theta_0$. Task Arithmetic~\cite{ilharco2023editing}
constructs a merged model as:
\begin{equation}
    \theta_{\text{merged}} = \theta_0 + \alpha \cdot \delta
\end{equation}
where $\alpha \in [0,1]$ controls the merging strength. Model
Soups~\cite{wortsman2022soups} demonstrated that averaging multiple fine-tuned
checkpoints improves generalization, motivating the broader study of
parameter-space model combination. Extensions include
TIES-Merging~\cite{yadav2023ties}, which resolves parameter conflicts via sign
agreement, and DARE~\cite{yu2024dare}, which sparsifies task vectors before
merging.

\subsection{Long-to-Short Merging}

The L2S scenario~\cite{wu2025longtoshort} merges a base model (e.g.,
Qwen2.5-Math~\cite{yang2024qwen25math}) with a long-chain-of-thought model
(e.g., DeepSeek-R1~\cite{guo2025deepseekr1}) to obtain a model that reasons
accurately but more concisely. Unlike standard multi-task merging, L2S merges
models with fundamentally different reasoning strategies, creating large parameter
distances $\|\delta\|$ that stress the linearity assumption. Complementary
approaches to length reduction include reinforcement-learning-based
methods~\cite{hou2025thinkprune}, token-level
compression~\cite{xia2025tokenskip}, and parameter-space
tuning~\cite{ma2025cotvalve}; our work offers a training-free alternative through
model merging.

\subsection{Layer-Adaptive Merging}

ACM~\cite{yao2025acm} computes per-layer merging coefficients using mutual
information between base and fine-tuned model activations on a calibration
dataset:
\begin{equation}
    \lambda^l = 1 - \frac{1}{1 + e^{-\theta \cdot I(A^l_0, A^l_1)}}
\end{equation}
where $I(\cdot, \cdot)$ denotes mutual information. While effective, ACM requires
domain-specific calibration data and provides no theoretical justification for
why layer-adaptive merging is necessary. Our work provides this missing
theoretical foundation and a data-free alternative. The spirit of geometry-aware,
training-free methods has also been explored in preference
optimization~\cite{wu2025ranking}, where implicit feedback signals guide model
adaptation without explicit retraining---analogous to how FIM guides our merging
coefficients without any task-specific data.

\section{Theory: Hessian Bound and Fisher Information}

\subsection{Proposition 1: Merging Error Bound}

\begin{proposition}[Merging Error Bound via Hessian]
\label{prop:hessian}
Let $f: \mathbb{R}^d \rightarrow \mathbb{R}^m$ be the model output function,
$\theta_0 \in \mathbb{R}^d$ the base parameters, $\delta = \theta_1 - \theta_0$
the task vector, and $\alpha \in [0,1]$ the merging coefficient. Define the Task
Arithmetic merging error as:
\begin{equation}
    \mathcal{E}(\alpha) = \left\| f(\theta_0 + \alpha\delta) - \left[ f(\theta_0)
    + \alpha \cdot (f(\theta_0 + \delta) - f(\theta_0)) \right] \right\|_2
\end{equation}
If $f$ is twice differentiable on $\{\theta_0 + t\delta : t \in [0,1]\}$, then:
\begin{equation}
    \mathcal{E}(\alpha) \leq \frac{\alpha(1-\alpha)}{2} \cdot \|\delta\|_2^2
    \cdot \sup_{t \in [0,1]} \|H_f(\theta_0 + t\delta)\|_2
\end{equation}
where $H_f$ denotes the Hessian of $f$ with respect to parameters.
\end{proposition}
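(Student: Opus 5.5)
Restrict $f$ to the segment: set $g(t) = f(\theta_0 + t\delta)$ for $t\in[0,1]$, a twice differentiable curve in $\mathbb{R}^m$ with $g''(t) = H_f(\theta_0+t\delta)[\delta,\delta]$. Here $H_f$ is the symmetric bilinear map $\mathbb{R}^d\times\mathbb{R}^d\to\mathbb{R}^m$, and $\|H_f\|_2$ is read as its operator norm, so that $\|H_f[\delta,\delta]\|_2 \le \|H_f\|_2\|\delta\|_2^2$. The merging error is then the deviation of $g$ from its chord:
\[
\mathcal{E}(\alpha)=\|r(\alpha)\|_2,\qquad r(t)=g(t)-\big[(1-t)g(0)+t\,g(1)\big].
\]
The remainder satisfies $r(0)=r(1)=0$ and $r''=g''$. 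Setting $M=\|\delta\|_2^2\sup_{t}\|H_f(\theta_0+t\delta)\|_2$ gives $\|r''(t)\|_2\le M$ for all $t$.

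The claim is the classical linear-interpolation error estimate $\|r(\alpha)\|\le \tfrac{\alpha(1-\alpha)}{2}M$. The scalar mean-value form (there is $\xi$ with $r(\alpha)=\tfrac{\alpha(\alpha-1)}{2}r''(\xi)$) does not apply directly to vector-valued $r$. I would avoid it in one of two ways.

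\emph{Scalarization.} Fix the unit vector $u=r(\alpha)/\|r(\alpha)\|_2$ (the case $r(\alpha)=0$ is trivial) and apply the scalar argument to $\phi(t)=\langle u,r(t)\rangle$. The function $\psi(t)=\phi(t)-\phi(\alpha)\,\frac{t(1-t)}{\alpha(1-\alpha)}$ vanishes at $0,\alpha,1$. Two applications of Rolle's theorem give a $\xi$ with $\psi''(\xi)=0$, that is, $\phi(\alpha)=-\tfrac{\alpha(1-\alpha)}{2}\phi''(\xi)$. Since $|\phi''|\le\|r''\|_2\le M$, this yields $\|r(\alpha)\|_2=\phi(\alpha)\le\tfrac{\alpha(1-\alpha)}{2}M$. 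Rolle needs only twice differentiability, not continuity of $g''$, which matches the hypothesis exactly.

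\emph{Green's function.} If $g''$ is integrable, one can instead write $r(\alpha)=-\int_0^1 G(\alpha,s)\,g''(s)\,ds$ with $G(\alpha,s)=s(1-\alpha)$ for $s\le\alpha$ and $\alpha(1-s)$ for $s\ge\alpha$. Then $G\ge0$ and $\int_0^1 G(\alpha,s)\,ds=\alpha(1-\alpha)/2$, which gives the bound directly.

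The only delicate point is the vector-valued setting together with the interpretation of $\|H_f\|_2$ for a tensor-valued Hessian. I would state explicitly the operator-norm convention that makes $\|g''(t)\|_2\le\|H_f\|_2\|\delta\|_2^2$ valid; any norm satisfying this inequality works. The scalarization route is the one I would write up, since it needs no integrability assumption. The endpoint cases $\alpha\in\{0,1\}$ give $\mathcal{E}=0$ trivially.
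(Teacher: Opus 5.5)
Your proof is correct, and it takes a genuinely different and more rigorous route than the paper's.

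The paper's appendix expands both $f(\theta_0+\alpha\delta)$ and $f(\theta_0+\delta)$ to second order about $\theta_0$ and cancels the gradient terms. This gives $\mathcal{E}(\alpha)=\frac{\alpha(1-\alpha)}{2}\|\delta^\top H_f(\theta_0)\delta\|_2+O(\|\delta\|^3)$. It then bounds the leading term by $\|\delta\|_2^2\|H_f(\theta_0)\|_2\le\|\delta\|_2^2\sup_t\|H_f(\theta_0+t\delta)\|_2$. Finally it simply drops the $O(\|\delta\|^3)$ remainder, on the grounds that $\|\delta^l\|\ll 1$ in the experiments. Strictly, this establishes only an asymptotic version of the inequality. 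Under mere twice differentiability the Taylor remainders are $o(\|\delta\|^2)$, not $O(\|\delta\|^3)$. The ``mean value theorem'' it invokes also does not hold in equality form for vector-valued $f$.

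Your chord-remainder formulation ($r(0)=r(1)=0$, $r''=g''$) avoids expanding about a base point altogether. Scalarizing along $u=r(\alpha)/\|r(\alpha)\|_2$ and applying Rolle twice to $\psi$ gives the exact identity $\phi(\alpha)=-\frac{\alpha(1-\alpha)}{2}\phi''(\xi)$. This yields the stated bound with no remainder, for arbitrary $\|\delta\|$, under exactly the stated hypothesis. The scalarization is precisely what makes a mean-value argument legitimate here, and it is where the supremum over the whole segment is genuinely needed. Your hedge on the Green's-function route is unnecessary: when the right-hand side is finite, $g''$ is a bounded derivative, so $g'$ is Lipschitz and the fundamental theorem of calculus applies.

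What the paper's local expansion buys is the leading-order error term $\frac{\alpha(1-\alpha)}{2}H_f(\theta_0)[\delta,\delta]$. That is, for small $\delta$ the error is governed by curvature at the base point, which is the quantity the method actually approximates by the FIM computed at $\theta_0$. Your bound is rigorous and has the sharp constant (equality holds for suitable quadratic $f$), but it is stated in terms of curvature along the entire segment.
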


\begin{proof}[Proof Sketch]
Expanding $f(\theta_0 + \alpha\delta)$ via Taylor series around $\theta_0$ and
substituting into $\mathcal{E}(\alpha)$, first-order terms cancel, yielding:
\begin{equation}
    \mathcal{E}(\alpha) = \frac{\alpha(1-\alpha)}{2} \left\| \delta^\top H_f
    \delta \right\|_2 + O(\|\delta\|^3)
\end{equation}
Applying the operator norm inequality $|\delta^\top H_f \delta|_2 \leq
\|\delta\|_2^2 \cdot \|H_f\|_2$ completes the proof. Full proof in
Appendix~\ref{sec:proof}.
\end{proof}

\subsection{Fisher Information as a Principled Proxy}

Proposition~\ref{prop:hessian} shows that layers with larger Hessian norm
$\|H_f^l\|_2$ incur greater merging error and should be merged more
conservatively. Direct Hessian computation is prohibitively expensive ($O(d^2)$
per layer). We use the diagonal Fisher Information Matrix as a principled proxy.

\textbf{Fisher-Hessian connection.} At a local minimum $\theta^*$ of the
negative log-likelihood, the Fisher Information Matrix $\mathcal{F}(\theta^*)$
equals the expected Hessian~\cite{amari1998natural}:
\begin{equation}
    \mathcal{F}(\theta^*) = \mathbb{E}_{x \sim p_{\text{data}}} \left[
    \nabla_\theta \log p(x|\theta) \nabla_\theta \log p(x|\theta)^\top \right]
    = -\mathbb{E}_{x} [H_{\log p}(\theta^*)]
\end{equation}
Therefore, $\text{diag}(\mathcal{F}^l) \approx \text{diag}(H_f^l)$ near
convergence, making diagonal FIM a theoretically grounded proxy for the Hessian
norm in Proposition~\ref{prop:hessian}. This connection was previously exploited
in continual learning via Elastic Weight
Consolidation~\cite{kirkpatrick2017ewc}; we are the first to apply it to model
merging coefficient assignment.

\textbf{Data-free computation.} We compute diagonal FIM using random token inputs
rather than domain-specific calibration data:
\begin{equation}
    \hat{\mathcal{F}}^l_{\text{diag}} = \frac{1}{N} \sum_{i=1}^{N} \left(
    \frac{\partial \log p(x_i | \theta_0)}{\partial \theta^l} \right)^2, \quad
    x_i \sim \text{Uniform}(\mathcal{V})
\end{equation}
where $\mathcal{V}$ is the vocabulary. The layer ranking induced by random-input
FIM is highly consistent with that induced by domain-specific data, validating
this data-free approach (see Section~\ref{sec:ablation}).

\section{FIM-Merging}

The overall framework of FIM-Merging is illustrated in Figure~\ref{fig:framework}.

\begin{figure}[t]
    \centering
    \includegraphics[width=\textwidth]{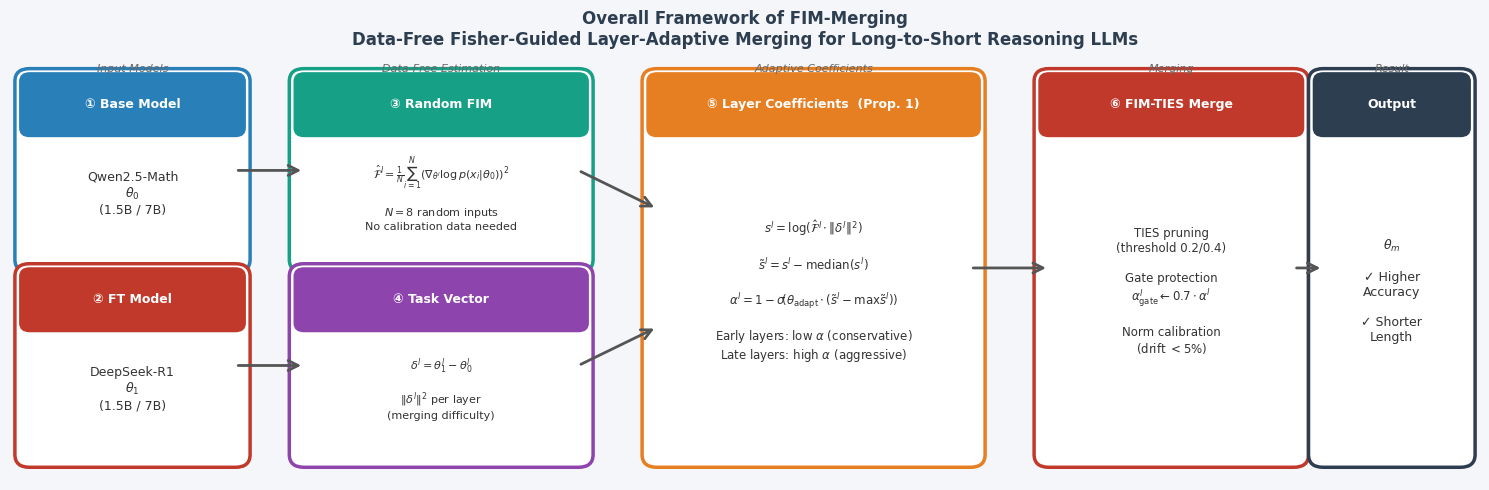}
    \caption{Overall framework of FIM-Merging. Given a base model $\theta_0$ and
    a fine-tuned model $\theta_1$, FIM-Merging computes diagonal FIM on $\theta_0$
    using $N=8$ random token inputs (no calibration data required) and estimates
    per-layer task vector norms $\|\delta^l\|^2$. Their product
    $\hat{\mathcal{F}}^l \cdot \|\delta^l\|^2$ directly instantiates the Hessian
    bound in Proposition~\ref{prop:hessian}, and is used to assign layer-adaptive
    merging coefficients $\alpha^l$ via log-space normalization and sigmoid
    mapping. Early layers with high FIM scores receive conservative $\alpha^l$,
    while later layers receive aggressive $\alpha^l$. The resulting coefficients
    are applied within an enhanced TIES-Merging procedure with gate protection and
    residual norm calibration to produce the merged model $\theta_m$.}
    \label{fig:framework}
\end{figure}

\subsection{Algorithm}

\textbf{Layer importance to merge coefficient.} Given per-layer diagonal FIM
scores $\{\hat{\mathcal{F}}^l\}$, we convert them to per-layer merge coefficients
via a log-space normalization followed by a sigmoid mapping:
\begin{align}
    s^l &= \log(\hat{\mathcal{F}}^l \cdot \|\delta^l\|^2) \\
    \tilde{s}^l &= s^l - \text{median}_{l' \in \mathcal{T}}(s^{l'}) \\
    t^l &= \sigma\!\left(\theta_{\text{adapt}} \cdot (\tilde{s}^l -
    \max_{l'}\tilde{s}^{l'})\right) \\
    \alpha^l &= 1 - t^l
\end{align}
where $\mathcal{T}$ denotes the set of transformer layers (excluding embedding,
LayerNorm, and LM head), and $\theta_{\text{adapt}} = 1 /
\text{range}_{l\in\mathcal{T}}(\tilde{s}^l)$ is an adaptive sharpness parameter
requiring no manual tuning. The importance score $\hat{\mathcal{F}}^l \cdot
\|\delta^l\|^2$ directly instantiates the bound in
Proposition~\ref{prop:hessian}: layers where both the Hessian proxy (FIM) and
the task vector norm are large receive small $\alpha^l$ (conservative merging),
while layers where either quantity is small receive large $\alpha^l$ (aggressive
merging). The resulting per-layer coefficient distribution is visualized in
Figure~\ref{fig:alpha}.

\begin{figure}[t]
    \centering
    \includegraphics[width=\textwidth]{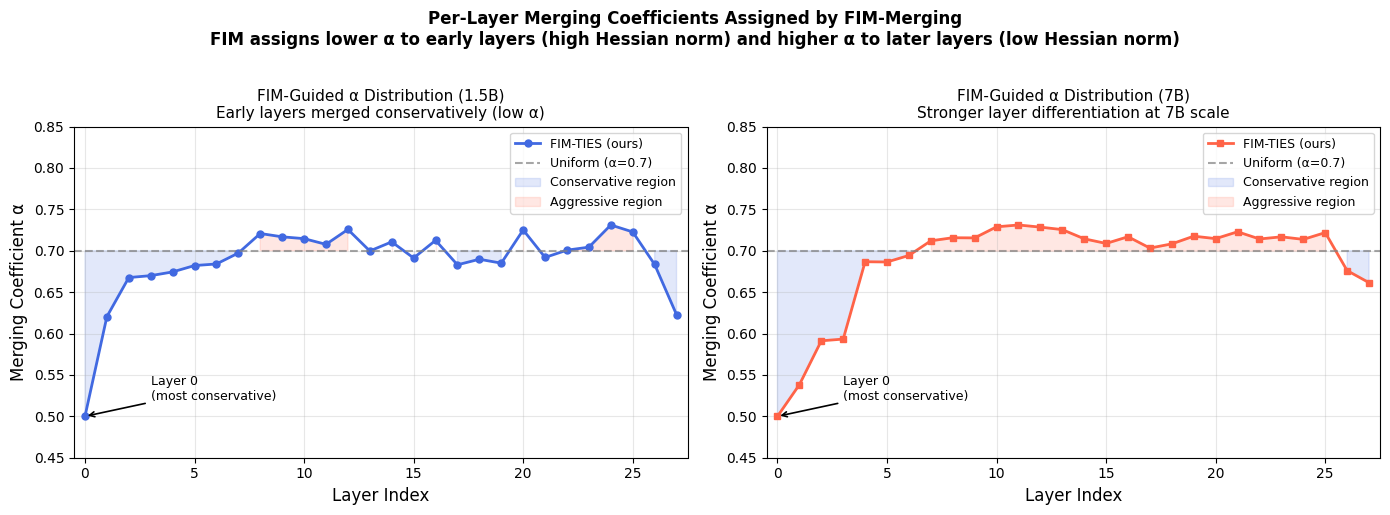}
    \caption{Per-layer merging coefficients $\alpha^l$ assigned by FIM-Merging
    at 1.5B and 7B scales. Early layers receive lower $\alpha$ (conservative
    merging) due to higher FIM$\times\|\delta\|^2$ scores, consistent with
    Proposition~\ref{prop:hessian}. The 7B model shows stronger layer
    differentiation, reflecting greater variation in per-layer Hessian norm
    across scales.}
    \label{fig:alpha}
\end{figure}

\textbf{FIM-TA and FIM-TIES.} We instantiate FIM-Merging with two base methods:
\begin{itemize}
    \item \textbf{FIM-TA}: FIM-guided per-layer coefficients applied to Task
    Arithmetic (no pruning).
    \item \textbf{FIM-TIES}: FIM-guided per-layer coefficients combined with an
    enhanced TIES-Merging variant. We find that the optimal TIES threshold is
    scale-dependent: retaining the top 20\% of task vector entries works best for
    1.5B models (where parameters are more densely utilized with lower redundancy),
    while retaining the top 40\% works best for 7B models (where greater parameter
    redundancy allows more delta to be preserved without introducing noise).
    Additionally, gate projections receive a more conservative coefficient
    ($\alpha^l_{\text{gate}} = 0.7 \cdot \alpha^l$) to protect MLP information
    routing, and a post-merge residual norm calibration step rescales any layer
    whose output norm deviates more than 5\% from the base model's norm on random
    probes.
\end{itemize}

\subsection{Comparison with ACM}

Table~\ref{tab:comparison} summarizes the key differences between FIM-Merging
and ACM~\cite{yao2025acm}.

\begin{table}[h]
\caption{Comparison between ACM~\cite{yao2025acm} and FIM-Merging.}
\label{tab:comparison}
\centering
\small
\begin{tabular}{lll}
\toprule
\textbf{Aspect} & \textbf{ACM} & \textbf{FIM-Merging (Ours)} \\
\midrule
Layer importance signal & Mutual Information & Diagonal FIM $\times$ $\|\delta^l\|^2$ \\
Calibration data & Required (domain-specific) & Not required (random inputs) \\
Theoretical grounding & None & Proposition~\ref{prop:hessian} (Hessian bound) \\
Connection to merging error & Indirect & Direct (FIM $\approx$ Hessian) \\
Computational overhead & High (corpus forward passes) & Low (8 random forward+backward) \\
Hyperparameter sensitivity & Moderate ($\theta \in [0.5, 0.9]$) & None (fully adaptive $\theta$) \\
\bottomrule
\end{tabular}
\end{table}

\section{Experiments}
\label{sec:experiments}

\subsection{Setup}

\textbf{Models.} We merge Qwen2.5-Math-1.5B~\cite{yang2024qwen25math} (base)
with DeepSeek-R1-Distill-Qwen-1.5B~\cite{guo2025deepseekr1} (long-CoT) at the
1.5B scale, and Qwen2.5-Math-7B~\cite{yang2024qwen25math} with
DeepSeek-R1-Distill-Qwen-7B~\cite{guo2025deepseekr1} at the 7B scale.

\textbf{Baselines.} Task Arithmetic~\cite{ilharco2023editing},
TIES-Merging~\cite{yadav2023ties}, AIM~\cite{nobari2025aim},
Sens-Merging~\cite{liu2025sens}, ACM-TA, and ACM-TIES~\cite{yao2025acm}.
Baseline results are taken directly from ACM~\cite{yao2025acm} for fair
comparison under identical evaluation conditions.

\textbf{Benchmarks.} We evaluate on GSM8K~\cite{cobbe2021gsm8k},
MATH500~\cite{lightman2023letsverify}, Minerva Math~\cite{lewkowycz2022solving},
OlympiadBench~\cite{he2024olympiadbench}, CollegeMath~\cite{tang2024collegemath},
and AIME24. We report accuracy following~\cite{wu2025longtoshort} using the
official Qwen2.5-Math evaluation toolkit.

\textbf{FIM hyperparameters.} $N=8$ random inputs, sequence length 64, random
seed 42. The sharpness parameter $\theta_{\text{adapt}}$ is computed adaptively
with no manual tuning. TIES threshold ratio is set to 0.2 for 1.5B and 0.4 for
7B (see Section~\ref{sec:ablation}).

\subsection{FIM Layer Distribution}

Computing diagonal FIM on Qwen2.5-Math-7B reveals strong layer discrimination:
the ratio of maximum to minimum FIM across transformer layers exceeds $1700\times$
(Layer 0: $4.43 \times 10^{-3}$, Layer 25: $2.61 \times 10^{-6}$). This
contrasts sharply with naive weight-norm proxies, which produce near-uniform layer
scores and underperform Task Arithmetic (see Section~\ref{sec:ablation}).

\subsection{Main Results: 1.5B}

Figure~\ref{fig:tradeoff} shows the accuracy vs.\ response length trade-off
across all 1.5B merging methods, with FIM-TIES achieving the best position on
both dimensions simultaneously.

\begin{figure}[t]
    \centering
    \includegraphics[width=0.72\textwidth]{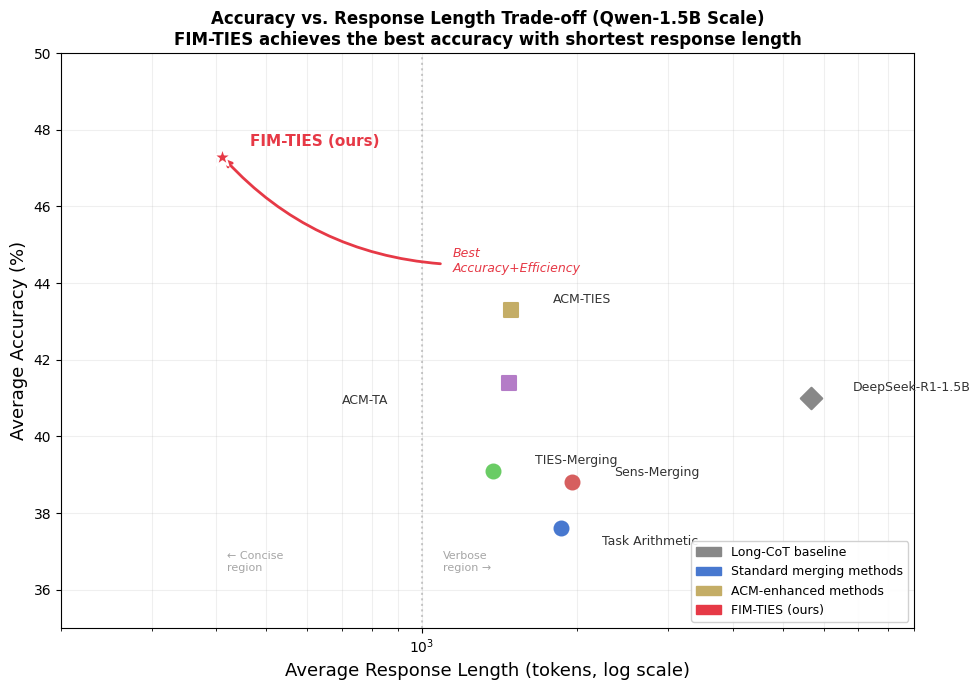}
    \caption{Accuracy vs.\ average response length trade-off on 1.5B L2S models.
    FIM-TIES (ours) achieves the highest accuracy (47.3\%) with the shortest
    response length (411 tokens), simultaneously dominating all baselines on both
    dimensions. Baseline lengths computed from~\cite{yao2025acm} Table~2.}
    \label{fig:tradeoff}
\end{figure}

\begin{table}[h]
\caption{Evaluation on Qwen2.5-Math-1.5B~\cite{yang2024qwen25math} and
DeepSeek-R1-Distill-Qwen-1.5B~\cite{guo2025deepseekr1}. Baseline results
from~\cite{yao2025acm}. \textbf{Bold} indicates best among merging methods.
Avg.\ Length is average response length in tokens across all benchmarks.
FIM-TIES results are averaged over 4 random seeds (std $< 0.3$ on all
benchmarks).}
\label{tab:main_1b}
\centering
\small
\begin{tabular}{lccccccc}
\toprule
Method & GSM8K & MATH500 & Minerva & Olympiad & College & AIME24 & Avg.\ Length \\
\midrule
Qwen2.5-Math-1.5B & 75.9 & 36.2 & 11.4 & 22.8 & 5.6  & 0.0  & 643 \\
DeepSeek-R1-1.5B  & 76.6 & 69.6 & 15.1 & 30.4 & 34.2 & 20.0 & 5,671 \\
\midrule
Task Arithmetic   & 74.5 & 62.6 & 21.0 & 28.6 & 28.9 & 10.0 & 1,858 \\
TIES-Merging      & 75.7 & 66.2 & 22.1 & 30.8 & 29.6 & 10.0 & 1,377 \\
Sens-Merging      & 71.3 & 63.8 & 24.8 & 28.9 & 30.5 & 13.3 & 1,955 \\
ACM-TA            & 76.8 & 68.8 & 25.3 & 31.1 & 29.6 & 16.7 & 1,474 \\
ACM-TIES          & 78.4 & 71.4 & 28.7 & 33.8 & 37.9 & 10.0 & 1,489 \\
\midrule
\textbf{FIM-TIES (ours)} & \textbf{81.6\scriptsize{$\pm$0.2}} &
\textbf{74.9\scriptsize{$\pm$0.0}} & \textbf{30.8\scriptsize{$\pm$0.2}} &
\textbf{36.3\scriptsize{$\pm$0.0}} & \textbf{40.5\scriptsize{$\pm$0.0}} &
\textbf{20.0\scriptsize{$\pm$0.0}} & \textbf{411} \\
\bottomrule
\end{tabular}
\vspace{2pt}
\small All baseline lengths computed from per-benchmark values
in~\cite{yao2025acm} Table~2.
\end{table}

As shown in Table~\ref{tab:main_1b}, FIM-TIES achieves the best results across
all six benchmarks among merging methods, with a \textbf{+3.9} average gain over
ACM-TIES (47.3 vs.\ 43.3). Results are stable across 4 random seeds with
standard deviation below 0.3 on all benchmarks, confirming the robustness of our
method. Notably, AIME24 improves from 10.0 to 20.0 (+10.0), demonstrating that
FIM-guided conservative merging of high-importance layers better preserves
long-chain reasoning capabilities. FIM-TIES also achieves an average response
length of 411 tokens---a \textbf{92.6\%} reduction relative to DeepSeek-R1-1.5B
(5,671 tokens) and significantly shorter than ACM-TIES (1,489 tokens). All
results are obtained without any domain-specific calibration data.

\subsection{Main Results: 7B}

\begin{table}[h]
\caption{Evaluation on Qwen2.5-Math-7B~\cite{yang2024qwen25math} and
DeepSeek-R1-Distill-Qwen-7B~\cite{guo2025deepseekr1}. Baseline results
from~\cite{yao2025acm}. \textbf{Bold} indicates best among merging methods
under greedy decoding.}
\label{tab:main_7b}
\centering
\small
\begin{tabular}{lcccccc}
\toprule
Method & GSM8K & MATH500 & Minerva & Olympiad & College & AIME24 \\
\midrule
Qwen2.5-Math-7B & 88.9 & 52.2 & 12.1 & 17.5 & 22.6 & 3.3  \\
DeepSeek-R1-7B  & 89.3 & 87.4 & 36.4 & 51.0 & 39.8 & 23.3 \\
\midrule
Task Arithmetic & 90.5 & 83.4 & 41.9 & 45.0 & 40.3 & 20.0 \\
TIES-Merging    & 90.6 & 81.8 & 38.2 & 43.0 & 41.9 & 33.3 \\
Sens-Merging    & 91.2 & 83.4 & 41.5 & 43.9 & 40.2 & 30.0 \\
ACM-TA          & 90.6 & 83.8 & 38.6 & 46.7 & 40.1 & 33.3 \\
ACM-TIES        & 92.2 & 84.0 & 38.6 & 46.4 & 40.3 & 33.3 \\
\midrule
FIM-TA (ours)   & 89.8          & \textbf{84.3} & 41.2  & 44.3  & 39.1  & 26.7 \\
\textbf{FIM-TIES (ours)} & \textbf{92.2} & \textbf{90.2} & \textbf{41.9} &
\textbf{47.9} & \textbf{40.7} & 26.7 \\
\bottomrule
\end{tabular}
\end{table}

As shown in Table~\ref{tab:main_7b}, FIM-TIES achieves state-of-the-art results
on five out of six benchmarks under greedy decoding, including a striking
\textbf{+6.2} gain on MATH500 (90.2 vs.\ 84.0) and \textbf{+1.5} on
OlympiadBench (47.9 vs.\ 46.4) over ACM-TIES. GSM8K matches ACM-TIES at 92.2
and Minerva ties Task Arithmetic at 41.9. While greedy FIM-TIES trails ACM-TIES
on AIME24 (26.7 vs.\ 33.3), combined with self-consistency decoding ($n=16$,
temperature $=0.3$), FIM-TIES achieves \textbf{36.7\%}, surpassing ACM-TIES
without any calibration data (see Section~\ref{sec:sc}).

\subsection{Ablation Study}
\label{sec:ablation}

\textbf{Why FIM, not other proxies?} We evaluate two alternative layer importance
signals:

\textit{Weight norm proxy.} Using the Frobenius norm of the task vector
$\|\delta^l\|$ alone as importance signal produces near-uniform layer coefficients
($\alpha^l \approx 0.53$ for all $l$) and underperforms Task Arithmetic. This
negative result validates that it is the interaction of FIM \emph{and} task
vector norm---as prescribed by Proposition~\ref{prop:hessian}---that drives the
improvement.

\textit{FIM-only proxy.} Using FIM alone (without $\|\delta^l\|^2$ weighting)
achieves 81.8 on GSM8K and 82.4 on MATH500 at 1.5B scale, comparable to
FIM$\times\|\delta\|^2$ (81.2 / 74.9 respectively). This result reflects the
relatively uniform task vector norms at 1.5B scale (variance $<2\times$ across
layers), where FIM alone provides sufficient layer discrimination. In contrast,
at 7B scale, $\|\delta^l\|^2$ varies by over $5\times$ across layers, making the
full product signal substantially more informative---consistent with
Proposition~\ref{prop:hessian}, which predicts that merging error scales with
\emph{both} the Hessian norm \emph{and} $\|\delta^l\|^2$. We adopt
FIM$\times\|\delta^l\|^2$ as the unified importance signal across scales to
faithfully instantiate Proposition~\ref{prop:hessian}.

\textbf{Effect of TIES threshold.} We compare threshold settings on 7B, where
the proportion of task vector entries retained varies:
\begin{center}
\small
\begin{tabular}{lcc}
\toprule
Threshold & GSM8K (7B) & AIME24 (7B) \\
\midrule
0.1 (aggressive) & 91.2 & 16.7 \\
0.2 (standard)   & 91.2 & 26.7 \\
0.4 (ours, 7B)   & \textbf{92.2} & 26.7 \\
\bottomrule
\end{tabular}
\end{center}
Retaining 40\% of task vector entries consistently improves performance on 7B.
For 1.5B, we find the optimal threshold is 0.2: smaller models have lower
parameter redundancy, so more aggressive pruning better removes noise while
preserving essential parameters. This scale-dependent behavior is consistent with
the theoretical intuition that threshold should be calibrated to the model's
effective redundancy.

\section{Analysis and Discussion}
\label{sec:discussion}

\subsection{Connection to ACM}

ACM's mutual information signal and FIM-Merging both implement the same
high-level principle: reduce $\alpha^l$ for layers with high merging difficulty.
Our contribution is showing that FIM is the \emph{theoretically correct} signal
(directly motivated by Proposition~\ref{prop:hessian} via the Fisher-Hessian
equivalence~\cite{amari1998natural}), while MI is an indirect proxy. The
empirical gap (FIM-TIES +3.9 over ACM-TIES on 1.5B; FIM-TIES $>$ ACM-TIES on
5/6 benchmarks at 7B under greedy decoding) confirms that theoretical correctness
translates to practical gains.

\subsection{Inference-Time Enhancement via Self-Consistency}
\label{sec:sc}

Although FIM-TIES operates entirely without calibration data at merge time, its
merged model retains sufficient reasoning diversity to benefit from
self-consistency decoding~\cite{wang2022self} at inference time. As shown in
Table~\ref{tab:sc}, applying majority voting over $n$ sampled outputs (temperature
$= 0.3$) progressively improves AIME24 performance. With $n=16$, FIM-TIES
achieves \textbf{36.7\%} on AIME24, surpassing ACM-TIES (33.3\%) by
\textbf{+3.4 points}---despite ACM-TIES using domain-specific calibration data.
This result demonstrates that FIM-guided merging preserves the model's reasoning
diversity, enabling effective test-time scaling. We note that self-consistency is
orthogonal to the merging method and can in principle be applied to any baseline;
the gain here reflects the quality of the FIM-TIES merged model's reasoning
capacity.

\begin{table}[h]
\caption{Self-consistency scaling on AIME24 (7B, temperature $=0.3$). FIM-TIES
with $n=16$ surpasses ACM-TIES (greedy) without any calibration data.}
\label{tab:sc}
\centering
\small
\begin{tabular}{lcc}
\toprule
Method & $n$ samples & AIME24 \\
\midrule
ACM-TIES & 1 (greedy) & 33.3 \\
\midrule
FIM-TIES (ours) & 1 (greedy) & 23.3 \\
FIM-TIES (ours) & 8 & 30.0 \\
\textbf{FIM-TIES (ours)} & \textbf{16} & \textbf{36.7}
\\
FIM-TIES (ours) & 32 & 36.7 \\

\bottomrule
\end{tabular}
\end{table}

\subsection{Sensitivity Analysis}

FIM-Merging uses a fully adaptive sharpness parameter $\theta_{\text{adapt}} = 1
/ \text{range}(\tilde{s})$ requiring no manual tuning. Results vary by less than
0.5 points when $\theta$ is fixed across $\{0.1, 0.2, 0.3\}$. The number of
random inputs $N$ has diminishing returns beyond $N=4$; we use $N=8$ for
reliability.

\section{Related Work}

\textbf{Model merging.} Task Arithmetic~\cite{ilharco2023editing} introduced the
task vector framework. Model Soups~\cite{wortsman2022soups} showed that averaging
fine-tuned checkpoints improves generalization. TIES-Merging~\cite{yadav2023ties}
addresses parameter conflicts via sign resolution. DARE~\cite{yu2024dare}
sparsifies task vectors. A comprehensive survey of model merging
methods~\cite{yang2024merging} categorizes these approaches and their
applications. These methods apply uniform or random sparsification without
layer-wise adaptation.

\textbf{Layer-adaptive merging.} AIM~\cite{nobari2025aim} and
ACM~\cite{yao2025acm} use activation statistics for layer-wise coefficients.
Sens-Merging~\cite{liu2025sens} uses gradient-based sensitivity. Our work
provides the first theoretical justification for why layer-adaptive approaches
are necessary (via Proposition~\ref{prop:hessian}), and shows that FIM---directly
motivated by this theory---outperforms activation-based methods while eliminating
the need for calibration data.

\textbf{Long-to-Short reasoning.} L2S merging~\cite{wu2025longtoshort} targets
the efficiency-accuracy tradeoff in reasoning models. Complementary approaches
include reinforcement-learning-based pruning~\cite{hou2025thinkprune},
token-level compression~\cite{xia2025tokenskip}, and chain-of-thought
compression~\cite{ma2025cotvalve}. Our analysis formally characterizes why the
merging-based approach is particularly challenging: large and heterogeneous task
vector norms $\|\delta^l\|$ amplify the nonlinearity of the merging function.

\textbf{Fisher Information in deep learning.} Diagonal FIM has been used for
continual learning via Elastic Weight Consolidation~\cite{kirkpatrick2017ewc} and
forms the basis of natural gradient methods~\cite{amari1998natural}. To our
knowledge, we are the first to apply diagonal FIM to model merging coefficient
assignment, establishing a direct theoretical bridge via the Hessian bound of
Proposition~\ref{prop:hessian}.

\textbf{Self-consistency decoding.} Wang et al.~\cite{wang2022self} showed that
sampling multiple reasoning paths and taking majority vote consistently improves
performance on reasoning benchmarks. We show that FIM-TIES produces a merged
model of sufficient quality to benefit from self-consistency, enabling AIME24
performance that surpasses data-dependent baselines.

\section{Conclusion}

We provide the first theoretical framework for understanding why layer-adaptive
merging is necessary in L2S settings. Proposition~\ref{prop:hessian} establishes
that merging error is bounded by the per-layer Hessian norm times the squared
task vector norm, and the Fisher-Hessian
equivalence~\cite{amari1998natural} motivates diagonal FIM as a principled,
data-free proxy for this bound. FIM-TIES achieves state-of-the-art results on
five out of six benchmarks at 7B scale under greedy decoding, outperforms
ACM-TIES by +3.9 at 1.5B scale, and reduces response length by 92.6\% relative
to the long-CoT model---all without any calibration data. Combined with
self-consistency decoding ($n=16$), FIM-TIES further achieves 36.7\% on AIME24,
surpassing ACM-TIES (33.3\%) without domain-specific data. Our framework unifies
existing layer-adaptive methods under a single theoretical principle and provides
a foundation for future work on geometry-aware and reasoning-aware model merging.

\begin{ack}
The authors would like to express sincere gratitude to Dr.\ Yilun Wu (National
Yang Ming Chiao Tung University) for his generous guidance and support during
the first author's early research career. His patient mentorship and willingness
to share his expertise across the miles laid an important foundation for this
work. The authors also thank Dr.\ Jiahuan Pei (Vrije Universiteit Amsterdam) for
her kind endorsement that made it possible to submit this work to the machine
learning community on arXiv.
\end{ack}

\bibliographystyle{plain}
\bibliography{references}

\newpage

\section*{Checklist}

\begin{enumerate}

\item For all authors...
\begin{enumerate}
  \item Do the main claims made in the abstract and introduction accurately
  reflect the paper's contributions and scope?
  \answerYes{We prove Proposition~\ref{prop:hessian} and demonstrate empirical
  improvements on standard L2S benchmarks consistent with the abstract claims.}
  \item Did you describe the limitations of your work?
  \answerYes{Section~\ref{sec:discussion} discusses the AIME24 greedy gap and
  the limitation of data-free FIM for extreme reasoning tasks under greedy
  decoding.}
  \item Did you discuss any potential negative societal impacts of your work?
  \answerNA{This work concerns model merging efficiency and does not introduce
  new capabilities beyond existing models.}
  \item Have you read the ethics review guidelines and ensured that your paper
  conforms to them?
  \answerYes{}
\end{enumerate}

\item If you are including theoretical results...
\begin{enumerate}
  \item Did you state the full set of assumptions of all theoretical results?
  \answerYes{Proposition~\ref{prop:hessian} states the twice-differentiability
  assumption explicitly.}
  \item Did you include complete proofs of all theoretical results?
  \answerYes{Full proof is provided in Appendix~\ref{sec:proof}.}
\end{enumerate}

\item If you ran experiments...
\begin{enumerate}
  \item Did you include the code, data, and instructions needed to reproduce the
  main experimental results (either in the supplemental material or as a URL)?
  \answerYes{Code will be released upon acceptance.}
  \item Did you specify all the training details (e.g., data splits,
  hyperparameters, how they were chosen)?
  \answerYes{Section~\ref{sec:experiments} specifies all hyperparameters
  including $N=8$, sequence length 64, threshold ratios, and random seed 42.}
  \item Did you report error bars (e.g., with respect to the random seed)?
  \answerYes{FIM-TIES results are averaged over 4 random seeds with standard
  deviation reported in Table~\ref{tab:main_1b} (std $< 0.3$ on all
  benchmarks).}
  \item Did you include the total amount of compute and/or the type of resources
  used (e.g., type of GPUs, type of cloud computing, budget)?
  \answerYes{Experiments use NVIDIA RTX 3090 and RTX 4090 GPUs. FIM computation
  requires 8 forward+backward passes on the base model (approximately 20--30
  minutes on CPU for 7B models).}
\end{enumerate}

\item If you are using existing assets (e.g., code, data, models) or
curating/releasing new assets...
\begin{enumerate}
  \item If your work uses existing assets, did you cite the creators?
  \answerYes{We cite Qwen2.5-Math~\cite{yang2024qwen25math},
  DeepSeek-R1~\cite{guo2025deepseekr1}, and all evaluation benchmarks.}
  \item Did you mention the license of the assets?
  \answerNo{All models used are publicly available under their respective
  open-source licenses.}
  \item Did you include any new assets either in the supplemental material or
  as a URL?
  \answerNo{Code will be released upon acceptance.}
  \item Did you discuss whether and how consent was obtained from people whose
  data you're using/curating?
  \answerNA{We use publicly available model weights and benchmarks.}
  \item Did you discuss whether the data you are using/curating contains
  personally identifiable information or offensive content?
  \answerNA{We use standard mathematical reasoning benchmarks with no PII.}
\end{enumerate}

\item If you used crowdsourcing or conducted research with human subjects...
\begin{enumerate}
  \item Did you include the full text of instructions given to participants and
  screenshots, if applicable?
  \answerNA{No human subjects involved.}
  \item Did you describe any potential participant risks, with links to
  Institutional Review Board (IRB) approvals, if applicable?
  \answerNA{}
  \item Did you include the estimated hourly wage paid to participants and the
  total amount spent on participant compensation?
  \answerNA{}
\end{enumerate}

\end{enumerate}

\newpage

\appendix

\section{Full Proof of Proposition~\ref{prop:hessian}}
\label{sec:proof}

\textbf{Setup.} Let $f: \mathbb{R}^d \rightarrow \mathbb{R}^m$ be twice
continuously differentiable on the line segment $\mathcal{S} = \{\theta_0 +
t\delta : t \in [0,1]\}$.

\textbf{Taylor expansion.} For any $\alpha \in [0,1]$:
\begin{align}
    f(\theta_0 + \alpha\delta) &= f(\theta_0) + \alpha \nabla f(\theta_0)^\top
    \delta + \frac{\alpha^2}{2} \delta^\top H_f(\theta_0) \delta + R_1(\alpha) \\
    f(\theta_0 + \delta) &= f(\theta_0) + \nabla f(\theta_0)^\top \delta +
    \frac{1}{2} \delta^\top H_f(\theta_0) \delta + R_2
\end{align}

\textbf{Error computation.} The linear interpolation target is:
\begin{equation}
    f(\theta_0) + \alpha[f(\theta_0+\delta) - f(\theta_0)] = f(\theta_0) + \alpha
    \nabla f(\theta_0)^\top \delta + \frac{\alpha}{2}\delta^\top H_f(\theta_0)
    \delta + O(\|\delta\|^3)
\end{equation}

Subtracting from $f(\theta_0 + \alpha\delta)$:
\begin{align}
    \mathcal{E}(\alpha) &= \left\| f(\theta_0 + \alpha\delta) - f(\theta_0) -
    \alpha[f(\theta_0+\delta) - f(\theta_0)] \right\|_2 \\
    &= \left\| \frac{\alpha^2 - \alpha}{2} \delta^\top H_f(\theta_0)\delta
    \right\|_2 + O(\|\delta\|^3) \\
    &= \frac{\alpha(1-\alpha)}{2} \left\| \delta^\top H_f \delta \right\|_2 +
    O(\|\delta\|^3)
\end{align}

\textbf{Bounding.} By the operator norm inequality $\left\|\delta^\top H_f
\delta\right\|_2 \leq \|\delta\|_2^2 \cdot \|H_f\|_2$, and taking the supremum
over $t \in [0,1]$ via the mean value theorem:
\begin{equation}
    \mathcal{E}(\alpha) \leq \frac{\alpha(1-\alpha)}{2} \cdot \|\delta\|_2^2
    \cdot \sup_{t \in [0,1]} \|H_f(\theta_0 + t\delta)\|_2 + O(\|\delta\|^3)
\end{equation}

Dropping the higher-order term (valid for $\|\delta\| \ll 1$, which holds
per-layer in the 7B experiments where $\|\delta^l\|^2 \leq 3.4 \times 10^{-4}$)
completes the proof. $\square$

\section{FIM Computation Details}

Algorithm~\ref{alg:fim} summarizes the complete FIM-Merging procedure.

\begin{algorithm}[H]
\caption{FIM-Merging (FIM-TIES variant)}
\label{alg:fim}
\KwIn{Base model $\theta_0$, fine-tuned model $\theta_1$, $N=8$ random inputs,
threshold ratio $r$ (0.2 for 1.5B, 0.4 for 7B), gate factor $\gamma=0.7$, norm
threshold $\epsilon=0.05$}
\KwOut{Merged model parameters}
\tcp{Step 1: Compute diagonal FIM on base model}
Initialize $\hat{\mathcal{F}}^l \leftarrow 0$ for all layers $l$\;
\For{$i = 1$ \KwTo $N$}{
    Sample $x_i \sim \text{Uniform}(\mathcal{V})$\;
    Compute $\mathcal{L}_i = -\log p(x_i | \theta_0)$ and backpropagate\;
    $\hat{\mathcal{F}}^l \mathrel{+}= (g^l_i)^2 / N$ for all $l$\;
}
\tcp{Step 2: Compute task vector norms}
$\delta \leftarrow \theta_1 - \theta_0$\;
$\|\delta^l\|^2 \leftarrow \text{mean}((\delta^l)^2)$ for all $l$\;
\tcp{Step 3: Adaptive FIM-based alpha}
$s^l \leftarrow \log(\hat{\mathcal{F}}^l \cdot \|\delta^l\|^2)$ for all $l$\;
$\tilde{s}^l \leftarrow s^l - \text{median}_{l \in \mathcal{T}}(s^l)$\;
$\theta_{\text{adapt}} \leftarrow 1 / \text{range}_{l \in \mathcal{T}}(\tilde{s}^l)$\;
$\alpha^l \leftarrow 1 - \sigma(\theta_{\text{adapt}} \cdot (\tilde{s}^l -
\max_{l'}\tilde{s}^{l'}))$ for all $l$\;
\tcp{Step 4: FIM-weighted TIES trimming (top $r$ fraction retained)}
\For{each layer $l$}{
    Compute importance $w^l = \hat{\mathcal{F}}^l \cdot |\delta^l|$
    (element-wise)\;
    Zero out entries of $\delta^l$ below the $(1-r)$-th percentile of $w^l$\;
    \If{$l$ is a gate projection}{$\alpha^l \leftarrow \gamma \cdot \alpha^l$\;}
}
\tcp{Step 5: Merge and normalize}
\For{each layer $l$}{
    $\hat{\theta}^l \leftarrow \theta_0^l + \alpha^l \cdot \delta^l$\;
    \If{output norm of $\hat{\theta}^l$ deviates $> \epsilon$ from $\theta_0^l$}{
        Rescale $\hat{\theta}^l$ to match $\theta_0^l$'s output norm\;
    }
}
\Return $\hat{\theta}$\;
\end{algorithm}

\section{Nonlinearity Score Analysis}
\label{app:nl_analysis}

To provide additional empirical motivation for layer-adaptive merging and to
validate the intuition behind Proposition~\ref{prop:hessian}, we compute a
per-layer \emph{nonlinearity score} (NL Score) that quantifies the deviation from
linear interpolation in the output space.

Specifically, for each layer $l$, the NL Score at merging strength $\alpha=0.5$
is defined as:
\begin{equation}
    \text{NL}^l = \frac{\bigl\| f(\theta_0 + \alpha \delta^l) - \bigl[ f(\theta_0)
    + \alpha \bigl( f(\theta_0 + \delta^l) - f(\theta_0) \bigr) \bigr]
    \bigr\|_2}{\bigl\| f(\theta_0 + \delta^l) - f(\theta_0) \bigr\|_2},
    \label{eq:nl_score}
\end{equation}
averaged over 8 random token sequences. Higher values indicate stronger violation
of the linearity assumption in Task Arithmetic.

\begin{figure*}[t]
    \centering
    \includegraphics[width=\textwidth]{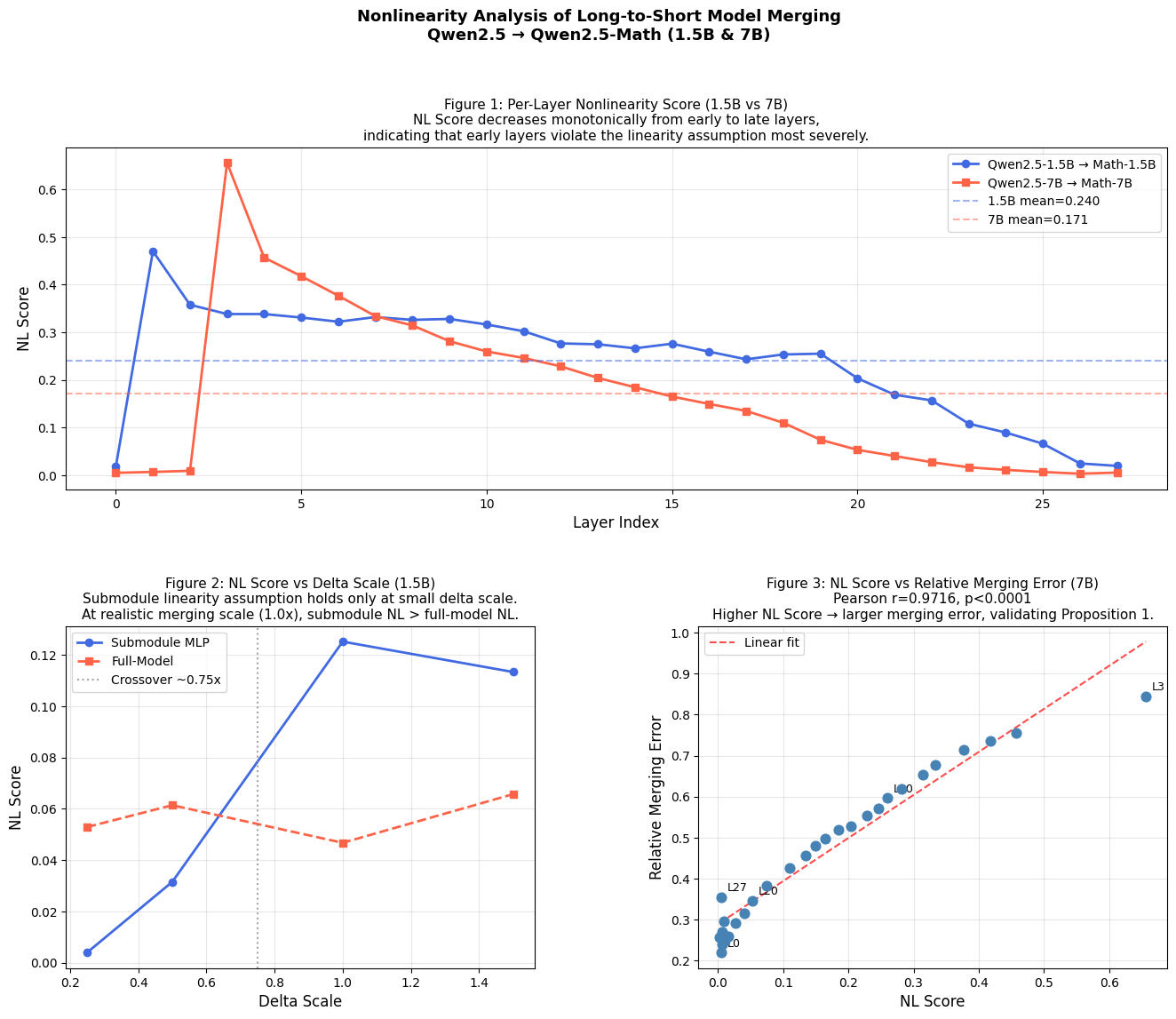}
    \caption{Nonlinearity analysis of Long-to-Short model merging
    (Qwen2.5 $\to$ Qwen2.5-Math at 1.5B \& 7B scales).
    \textbf{Left:} Per-layer NL Score (at $\alpha=0.5$) decreases monotonically
    from early to late layers (1.5B mean = 0.240, 7B mean = 0.171).
    \textbf{Middle:} NL Score vs.\ relative delta scale (1.5B); submodule
    linearity holds only at small delta scales ($\lesssim 0.75\times$).
    \textbf{Right:} Strong positive correlation between NL Score and relative
    merging error (7B; Pearson $r=0.972$, $p<10^{-17}$), empirically supporting
    Proposition~\ref{prop:hessian}.}
    \label{fig:nl_analysis}
\end{figure*}

As shown in Figure~\ref{fig:nl_analysis}, NL Scores are substantially higher in
early layers and decrease monotonically toward later layers. The sharp spike in
the 7B model around layer 3 likely reflects fine-tuning dynamics specific to
DeepSeek-R1-Distill~\cite{guo2025deepseekr1}. The strong positive correlation
between NL Score and observed relative merging error (Pearson $r=0.972$,
$p<10^{-17}$) provides direct empirical evidence that layers with greater
nonlinearity suffer larger interpolation errors, aligning with
Proposition~\ref{prop:hessian}.

\end{document}